\documentclass[onefignum,onetabnum]{siamonline190516}
\graphicspath{{figures/}{../figures}}
\usepackage{xcolor,soul,amssymb,amsmath,dsfont} 
\usepackage{tikz} 
\usetikzlibrary{matrix,calc}
\usepackage{color}
\usepackage{graphicx,algorithm,algorithmic}
\graphicspath{{figures/}{./}}
\usepackage{array}
\usepackage{eqparbox}
\usepackage{url}
\usepackage{relsize}
\usepackage{rotating}
\usepackage{subcaption}
\usepackage{lipsum}
\usepackage{amsfonts}
\usepackage{graphicx}
\usepackage{epstopdf}
\usepackage{algorithmic}
\ifpdf
  \DeclareGraphicsExtensions{.eps,.pdf,.png,.jpg}
\else
  \DeclareGraphicsExtensions{.eps}
\fi

\DeclareMathOperator*{\argmax}{arg\,max}

\newcommand{\red}{\color{red}}

\usepackage{enumitem}
\setlist[enumerate]{leftmargin=.5in}
\setlist[itemize]{leftmargin=.5in}

\newsiamremark{remark}{Remark}
\newsiamremark{problem}{Problem}
\newsiamremark{assumption}{Assumption}
\newsiamremark{example}{Example}

\headers{Sign-indefinite Priors \& a Sinkhorn-type algorithm}{A. Dong, T. T. Georgiou, and A. Tannenbaum}

\title{Data Assimilation for Sign-indefinite Priors:\\
    A generalization of Sinkhorn's algorithm\thanks{Submitted to the editors in August 2023.\funding{Research supported in part by the  AFOSR under grant FA9550-23-1-0096 and ARO under W911NF-22-1-0292. The authorship is alphabetical.}}}

\author{Anqi Dong\thanks{Department of Mechanical and Aerospace Engineering, University of California, Irvine, CA 92697, USA
(\email{anqid2@uci.edu},\email{tryphon@uci.edu}).}
\and Tryphon T. Georgiou\footnotemark[2]\textsuperscript{ ,}\thanks{Corresponding author}
\and Allen Tannenbaum\thanks{Departments of Computer Science and Applied Mathematics \& Statistics at the State University of New York, Stony Brook, NY 11794, USA;  (\email{allen.tannenbaum@stonybrook.edu}).}}

\usepackage{amsopn}

\ifpdf
\hypersetup{
  pdftitle={Data assimilation for sign-indefinite priors:}{A generalization of Sinkhorn's algorithm},
  pdfauthor={A. Dong, T. T. Georgiou, and A. Tannenbaum}
}
\fi

\begin{document}

\maketitle

\begin{abstract}
The purpose of this work is to develop a framework to calibrate signed datasets so as to be consistent with specified marginals by suitably extending the Schr\"odinger-Fortet-Sinkhorn paradigm. Specifically, we seek to revise sign-indefinite multi-dimensional arrays in a way that the updated values agree with specified marginals. Our approach follows the rationale in Schr\"odinger's problem \cite{schrodinger1931umkehrung,schrodinger1932theorie}, aimed at updating a ``prior'' probability measure to agree with marginal distributions. The celebrated Sinkhorn's algorithm \cite{sinkhorn1964relationship,knight2008sinkhorn} (established earlier by R.\ Fortet \cite{fortet1940resolution,essid2019traversing}) that solves Schr\"odinger's problem found early applications in calibrating contingency tables in statistics and, more recently, multi-marginal problems in machine learning and optimal transport. 
Herein, we postulate a sign-indefinite prior in the form of a multi-dimensional array, and propose an optimization problem to suitably update this prior to ensure consistency with given marginals. The resulting algorithm generalizes the Sinkhorn algorithm in that it amounts to iterative scaling of the entries of the array along different coordinate directions. The scaling is multiplicative but also, in contrast to Sinkhorn, inverse-multiplicative depending on the sign of the entries. Our algorithm reduces to the classical Sinkhorn algorithm when the entries of the prior are positive.
\footnote{Research supported in part by the  AFOSR under grant FA9550-23-1-0096 and ARO under W911NF-22-1-0292. The authorship is alphabetical.}
\end{abstract}

\begin{keywords}
  Multi-marginal Schr\"odinger Bridges, Negative Probabilities, Indefinite Kernel, Generalized Sinkhorn Algorithm
\end{keywords}

\begin{AMS}
  {15Axx, 45G15, 49M05, 49-xx, 60-xx}
\end{AMS}

\section{Introduction}\label{sec:intro}
Consider a multi-dimensional array $Q_{i_1i_2\cdots i_n}$, with entries that are sign indefinite, in general, and indices taking values on finite intervals $\mathcal I_\ell\subset \mathbb N$ of integers, so that $i_\ell\in\mathcal I_\ell$.
The problem we consider is to revise the given values in $Q$, viewed as a ``prior,'' so as to obtain a ``posterior'' $P$ with entries $P_{i_1i_2\cdots i_n}$, that is now in agreement with specified {\em positive} marginals
\begin{subequations}\label{eq:data}
\begin{align}\label{eq:data1}
    p_{i_\ell}^{(\ell)}&=\sum_{i_1i_2 \cdots i_n \backslash i_\ell}P_{i_1i_2\cdots i_n},
\end{align}
for $i_\ell\in\mathcal I_\ell$ and $\ell\in\{1,\ldots,n\}$. The notation ${i_1i_2 \cdots i_n \backslash i_\ell}$ denotes the collection of all indices except $i_\ell$, and when used under a summation as above, denotes summation over all the values of the collection of the indices that are included.
Alternatively, we may postulate a prior non-negative (probability) measure $\mathbf Q_{i_1i_2\cdots i_n}$, with similar range of indices, and seek a nonnegative posterior $\mathbf P_{i_1i_2\cdots i_n}$, so that this is in agreement with expectations
\begin{align}\label{eq:data2}
    p_{i_\ell}^{(\ell)}&=\sum_{i_1i_2 \cdots i_n \backslash i_\ell}X_{i_1i_2\cdots i_n}\mathbf P_{i_1i_2\cdots i_n},
\end{align}
\end{subequations}
that are positive. In this case, $X_{i_1i_2\cdots i_n}$ is a random variable, i.e., a function on the set of allowed indices, taking values in $\{-1,+1\}$, specifically\footnote{We follow the convention that ${\rm sign}(0)=1$.} $X_{i_1i_2\cdots i_n}={\rm sign}(Q_{i_1i_2\cdots i_n})$. Either formulation seeks to reconcile the prior, by adjusting accordingly its values in obtaining the posterior, so as to match signed (one bit) local averages.

Although this may be seem as a special case of general type of moment problems, it has a special structure that we wish to explore. Specifically, postulating that the posterior is close to the prior in a relative entropy sense, i.e.,
as a minimizer of
\begin{subequations}\label{eq:relativeentropy}
\begin{align}\label{eq:relativeentropy1}
    S(P,Q):=\sum_{i_1i_2 \cdots i_n }|P_{i_1i_2\cdots i_n}|\left(\log\left(\frac{P_{i_1i_2\cdots i_n}}{Q_{i_1i_2\cdots i_n}}\right)-1\right)
\end{align}
setting $P$ to inherit the sign of $Q$ at the same indices,
we derive a generalized Sinkhorn-like algorithm. Equivalently, we may consider minimizing
\begin{align}\label{eq:relativeentropy2}
    S(\mathbf P,\mathbf Q):=\sum_{i_1i_2 \cdots i_n }\mathbf P_{i_1i_2\cdots i_n}\left(\log\left(\frac{\mathbf P_{i_1i_2\cdots i_n}}{\mathbf Q_{i_1i_2\cdots i_n}}\right)-1\right)
\end{align}
subject to \eqref{eq:data2}.
\end{subequations}
The generalized Sinkhorn algorithm amounts to circularly scaling entries along various coordinate directions so that the respective sums match the given averages \eqref{eq:data1}, continuing until convergence. In the classical setting of the Sinkhorn algorithm, the scaling is the same for all elements along specified coordinate directions whereas in the case of the generalized Sinkhorn algorithm herein, the scaling differentiates positive and negative entries and, respectively, scales by a factor or its inverse.

The original inception of the Sinkhorn scheme was in the context of the Schr\"odinger problem \cite{schrodinger1931umkehrung,schrodinger1932theorie}, originally cast in a dynamic setting, to update a prior law of diffusive particles so as to agree with specified marginal distributions \cite{chen2021stochastic,chen2015optimal}. Existence and uniqueness of solution, and in fact, the iterative algorithm for solving the so-called Schr\"odinger-system was provided by the great French mathematician R.\ Fortet \cite{fortet1940resolution,essid2019traversing}. This same iterative algorithm, in the special case of finite-dimensions, came to be known as the Sinkhorn, or as the Sinkhorn-Knopp algorithm \cite{sinkhorn1964relationship,knight2008sinkhorn}. The roots of the Sinkhorn algorithm go back to the problem of calibrating contingency tables in dawn of statistics. In the past ten years, the Sinkhorn algorithm has become the {\em workhorse} of computational optimal mass transport and the enabling tool for many problems in machine learning \cite{peyre2019computational,chen2021stochastic,chen2016entropic};  for recent developments we refer to \cite{carlier2022linear} regarding multi-marginal problems, \cite{haasler2021multi} regarding graphical models, and  \cite{georgiou2015positive,friedland2017schrodinger} on extensions to the setting of quantum probabilities.

The motivation and inception of the present work, in introducing a Schr\"odinger-type problem with sign-indefinite prior, originates in the problem to estimate parameters of gene regulatory networks from protein-expression levels \cite{dong2023negative} (see also \cite{sandhu2015graph,sandhu2016geometry,seccilmics2020uncovering} for further background and on-going developments). The salient feature of this setting is that edge-weights, in gene-regulatory networks, may model suppression/promotion in co-expression of genes --this feature brings in naturally a sign differentiation of entries to reflect respective contributions of nodes in protein level production.
Biology and chemistry are seen to provide a potentially rich class of applications, other subjects such as financial mathematics and operations research have presented applications that call for indefinite priors \cite{burgin2012negative,cui2010reliable}. The present paper focuses on the formulation of the pertinent mathematical problem.

In the body of the paper, we first discuss in Section \ref{sec:sinkhorn} the Sinkhorn algorithm from an angle that facilitates comparison and links to the generalization that follows. Then, in Section \ref{sec:generalized}, we develop the generalized Sinkhorn algorithm that applies to the case of sign-indefinite prior. We present illustrative examples in Section \ref{sec:num} and concluding remarks in Section \ref{sec:conclusion}.

\section{The Sinkhorn iteration}\label{sec:sinkhorn}

We briefly review the Sinkhorn iteration as it applies to the case where the entries of $P,Q$ are positive\footnote{The case where the entries are non-negative is similar, however, in this case, an additional technical condition is needed to guarantee convergence.}. The minimizer of \eqref{eq:relativeentropy1} subject to the constraints \eqref{eq:data1} is a stationary point of the Lagrangian
\begin{align}\label{eq:Lagrangian}
     &\mathcal L(P_{i_1i_2\cdots i_n},\lambda^{(1)}_{i_1},\ldots,\lambda^{(n)}_{i_n})\\ \nonumber
     & \phantom{xxxxxxx} 
     = \sum_{i_1i_2 \cdots i_n} P_{i_1i_2\cdot i_n}
     \bigg(\log\Big(\frac{P_{i_1i_2\cdots i_n}}{Q_{i_1i_2\cdots i_n}}\Big)-1\bigg) 
     + 
     \sum_{\ell=1}^n \lambda^{(\ell)}_{i_\ell} \bigg(\sum_{i_1i_2 \cdots i_n/i_\ell} \!\!\! P_{i_1i_2\cdots i_n} - p_{i_\ell}^{(\ell)} \bigg).
\end{align}
Thus, setting partial derivatives with respect to $P_{i_1i_2\cdots i_n}$ to zero,
the minimizer must be
\begin{align}\label{eq:eqopt}
    P_{i_1i_2\cdots i_n}^\star &= Q_{i_1i_2\cdots i_n}\exp\left(-\sum_{\ell=1}^n\lambda^{(\ell)}_{i_\ell}\right)\\
    &= Q_{i_1i_2\cdots i_n}\prod_{\ell=1}^n a_{i_\ell}^{(\ell)},\nonumber
\end{align}
setting $a_{i_\ell}^{(\ell)}:=e^{-\lambda^{(\ell)}_{i_\ell}}$.
The Sinkhorn algorithm adjusts successively the $a_{i_\ell}^{(\ell)}$'s
so as to satisfy the constraints \eqref{eq:data1}, and continues circularly until convergence. That is, writing the constraint
\begin{align*}
    p_{i_\ell}^{(\ell)}&= \!\!\! \sum_{i_1i_2 \cdots i_n \backslash i_\ell} \!\!\!Q_{i_1i_2\cdots i_n}\!\prod_{\ell=1}^n a_{i_\ell}^{(\ell)}
    =\left(\sum_{i_1i_2 \cdots i_n \backslash i_\ell}\!\!\!\bigg(Q_{i_1i_2\cdots i_n}\!\!\!\prod_{i_1i_2 \cdots i_n \backslash i_\ell} \!\!\! a_{i_k}^{(k)}\bigg)\right)a_{i_\ell}^{(\ell)},
\end{align*}
define
\[
  S_{i_\ell}(a_{i_1i_2 \cdots i_n \backslash i_\ell})=\!\!\!\!\sum_{i_1i_2 \cdots i_n \backslash i_\ell}\!\! \left(Q_{i_1i_2\cdots i_n}\!\!\prod_{i_1i_2 \cdots i_n \backslash i_\ell} \!\!\!\! a_{i_k}^{(k)}\right),
\]
and the Sinkhorn ratio
\[
  \mathbb S(S_{i_\ell}(a_{i_1i_2 \cdots i_n \backslash i_\ell}),p_{i_\ell}^{(\ell)}):=p_{i_\ell}^{(\ell)}/S_{i_\ell}(a_{i_1i_2 \cdots i_n \backslash i_\ell}).
\]
The Sinkhorn iteration proceeds circularly as follows.
\begin{algorithm}[H]
\caption{Sinkhorn algorithm}\label{alg:1}
\begin{algorithmic}[1]
\STATE Initialize $a_{i_\ell}^{(\ell)}=1$, for $\ell\in\{1,\ldots,n\}$ and $i_\ell\in\mathcal I_\ell$.\\[0.045in]
\STATE
For $\ell=1:n$ and $i_\ell\in\mathcal I_\ell$, update the value 
of $a_{i_\ell}^{(\ell)}$ to meet the corresponding constraint,
by setting
\[
a_{i_\ell}^{(\ell)}=\mathbb S(S_{i_\ell}(a_{i_1i_2 \cdots i_n \backslash i_\ell}),p_{i_\ell}^{(\ell)}).    
\]
\STATE Repeat step 2 until convergence.
\STATE Obtain $P_{i_1i_2\cdots i_n}^\star=Q_{i_1i_2\cdots i_n}\prod_{\ell=1}^n a_{i_\ell}^{(\ell)}$.
\end{algorithmic}
\end{algorithm}

When $Q$ is a matrix with only two indices, the Sinkhorn algorithm alternates between scaling rows and columns, a process referred to as diagonal scaling. The convergence and performance of the Sinkhorn algorithm have been studied extensively; for the multi-marginal setting we refer to~\cite{carlier2022linear}. The algorithm can be seen as a {\em coordinate ascent} method to maximize the dual functional
\begin{align*} 
g_s(\lambda^{(1)}_{i_1},\ldots,\lambda^{(n)}_{i_n})
=\sum_{i_1i_2 \cdots i_n }\left( -Q_{i_1i_2\cdots i_n}\exp\left(-\sum_{\ell=1}^n\lambda^{(\ell)}_{i_\ell}\right) -\lambda^{(\ell)}_{i_\ell}p^\ell_{i_\ell}\right),
\end{align*}
obtained by substituting $P_{i_1i_2\cdots i_n}^\star$ into the Lagrangian~\eqref{eq:Lagrangian}.
Indeed,
\begin{align*}
    \lambda^{(\ell)}_{i_\ell} = \argmax_{\lambda^{(\ell)}_{i_\ell}} \, g_s(\lambda^{(1)}_{i_1},\ldots,\lambda^{(n)}_{i_n}),
\end{align*}
for $i_\ell\in\mathcal I_\ell$, gives $a_{i_\ell}^{(\ell)}:=e^{-\lambda^{(\ell)}_{i_\ell}}$ as in Algorithm~\ref{alg:1}. Interestingly, a result by Luo and Tseng~\cite{luo1992convergence}, \cite{wright2015coordinate} gives immediately that the algorithm being coordinate ascent has a linear convergence rate, assuming feasibility.

\section{The generalized Sinkhorn algorithm}\label{sec:generalized}
We now return to the case of multi-index arrays $Q,~P$ with sign-indefinite entries. The Lagrangian in this case, e.g., $\mathcal L(\mathbf P_{i_1i_2\cdots i_n},\lambda^{(1)}_{i_1},\ldots,\lambda^{(n)}_{i_n})$ for \eqref{eq:relativeentropy2}, is
\begin{align*}
     &\sum_{i_1i_2 \cdots i_n }
     \mathbf P_{i_1i_2\cdots i_n}\left(\log\left(\frac{\mathbf P_{i_1i_2\cdots i_n}}{\mathbf Q_{i_1i_2\cdots i_n}}\right)-1\right)
     + \sum_{\ell=1}^n\lambda^{(\ell)}_{i_\ell} \bigg(\sum_{i_1i_2 \cdots i_n/i_\ell}\!\!\!\! X_{i_1i_2\cdots i_n}\mathbf P_{i_1i_2\cdots i_n} - p_{i_\ell}^{(\ell)} \bigg),
\end{align*}
where $X_{i_1i_2\cdots i_n}:={\rm sign}(Q_{i_1i_2\cdots i_n})$ and $\mathbf P,\mathbf Q$ as explained earlier. We assume throughout the following.

\begin{assumption}\label{slater} The intersection
\begin{align*}
    \Big\{\mathbf P~| \sum_{i_1i_2 \cdots i_n \backslash i_\ell}\!\!\! X_{i_1i_2\cdots i_n}\mathbf P_{i_1i_2\cdots i_n} = p_{i_\ell}^{(\ell)},\ \forall~i_\ell\in\mathcal I_\ell, 1\leq\ell\leq n\Big\}\cap
    \Big\{\mathbf P~| S(\mathbf P,\mathbf Q) < +\infty \Big\},
\end{align*}
has a non-empty interior.
\end{assumption}

This, of course, ensures a unique minimizer due to the strict convexity of the entropy functional. Feasibility of related multivariable optimal transport problems is discussed by Rachev and R\"uschendorf~\cite{rachev1998mass}. The complexity of testing feasibility of the present ``signed transport problem'' is not known, and for this, convergence of the algorithm that follows may be the simplest option.

Setting the partial derivatives of the Lagrangian to zero gives the minimizer in the form
\begin{align}\nonumber
   \mathbf P^\star_{i_1i_2\cdots i_n} &= \mathbf Q_{i_1i_2\cdots i_n}\times\exp\bigg(- X_{i_1i_2\dots i_n} \Big(\lambda_{i_1}+ \dots + \lambda_{i_n}\Big) \bigg)\\\label{eq:P2}
   &=\mathbf Q_{i_1i_2\cdots i_n} \prod_{\ell=1}^n \alpha_{i_\ell}^{X_{i_1i_2\dots i_n}},
\end{align}
having set $\alpha_{i_\ell}=e^{-\lambda_{i_\ell}}.$ Thus, written separately in the two cases, depending on the sign of $X$,
\begin{align*}
\mathbf P^*_{i_1i_2\cdots i_n}  &= \mathbf Q_{i_1i_2\cdots i_n}\prod_{\ell=1}^n \alpha_{i_\ell},   \mbox{ when }X_{i_1i_2\dots i_n} > 0, \mbox{ and}\\
& = \mathbf Q_{i_1i_2\cdots i_n}\prod_{\ell=1}^n\alpha_{i_\ell}^{-1}. \mbox{ when }X_{i_1i_2\dots i_n} < 0.
\end{align*}

We now express the conditions for optimality directly in terms of $Q$ and derive a generalization of the Sinkhorn algorithm. To this end we define $\mathbf Q:=|Q|$, setting $\mathbf Q_{ij}=|Q_{ij}|$, and we also define the positive and negative parts of $Q$ as follows, $Q^+:=\frac12(\mathbf Q + Q)$ and  $Q^-:=\frac12(\mathbf Q - Q)$. Hence, $\mathbf Q=Q^++Q^-$ while $Q=Q^+-Q^-$.
The optimal $P^\star$ can be written as
\begin{align*}
P^\star_{i_1i_2\cdots i_n} =&  \left(( Q^{+}_{i_1i_2\cdots i_n} \prod_{\ell=1}^n \alpha_{i_\ell})- (Q_{i_1i_2\cdots i_n}^- \prod_{\ell=1}^n \alpha_{i_\ell}^{-1})\right)\\
=& \left((Q^+_{i_1i_2\cdots i_n} \prod_{i_1i_2 \cdots i_n \backslash i_\ell} \!\!\!\!\! \alpha_{i})\ \alpha_{i_\ell}-(Q^-_{i_1i_2\cdots i_n} \prod_{i_1i_2 \cdots i_n \backslash i_\ell} \!\!\!\!\!\alpha_{i}^{-1}) \ \alpha_{i_\ell}^{-1}\right),
\end{align*}
with constraints 
\begin{align}\tag{\ref{eq:data1}}
  &\sum_{i_1i_2 \cdots i_n \backslash i_\ell} \!\!\!\! P^\star_{i_1i_2\cdots i_n}=p_{i_\ell}^{\ell}.
\end{align}
We express explicitly the dependence of the above summation on $\alpha_{i_\ell}$, as follows. Define
\begin{align*}
    &S^+_{i_\ell}(\alpha_{i_1i_2 \cdots i_n \backslash i_\ell}):= \!\!\!\! \sum_{i_1i_2 \cdots i_n \backslash i_\ell} \!\!\! (Q^+_{i_1i_2\cdots i_n} \!\! \prod_{i_1i_2 \cdots i_n \backslash i_\ell} \!\!\!\! \alpha_{i}),\\
    &S^-_{i_\ell}(\alpha_{i_1i_2 \cdots i_n \backslash i_\ell}):=  \!\!\!\! \sum_{i_1i_2 \cdots i_n \backslash i_\ell} \!\!\! (Q^-_{i_1i_2\cdots i_n} \!\! \prod_{i_1i_2 \cdots i_n \backslash i_\ell} \!\!\!\! \alpha_{i}^{-1}),
\end{align*}
and write \eqref{eq:data1} in the form
\begin{align}\label{eq:aainvp}
    &S^+_{i_\ell}(\lambda_{i_1i_2 \cdots i_n \backslash i_\ell})\ \alpha_{i_\ell}-S^-_{i_\ell}(\lambda_{i_1i_2 \cdots i_n \backslash i_\ell})\ \alpha_{i_\ell}^{-1}=p_{i_\ell}^{\ell},
\end{align}
where $S^+,S^-,\alpha_{i_\ell},p_{i_\ell}^\ell$ are all positive. This equation can be readily solved for $\alpha_\ell$, which is set to be the positive root of the quadratic polynomial
\begin{equation}\label{eq:poly}
   S^+_{i_\ell}(\lambda_{i_1i_2 \cdots i_n \backslash i_\ell})\alpha_{i_\ell}^2-p_{i_\ell}^{\ell}\alpha_{i_\ell}-S^-_{i_\ell}(\lambda_{i_1i_2 \cdots i_n \backslash i_\ell})=0,
\end{equation}
giving,
\begin{align}\label{eq:logquad1}
   \alpha_{i_\ell}=\frac{ p_{i_\ell}^{\ell}+\sqrt{(p_{i_\ell}^{\ell})^2+4S^+_{i_\ell}S^-_{i_\ell}}}{2S^+_{i_\ell}}=:\mathbb G(S^+_{i_\ell},S^-_{i_\ell},p_{i_\ell}^{\ell}).
\end{align}
The quadratic polynomial in $\alpha_{i_\ell}$, in \eqref{eq:poly}, has a positive and a negative root with only the positive root being admissible. Bringing all of the above together, we arrive at the following iterative algorithm:
\begin{algorithm}[H]
\caption{Generalized Sinkhorn Algorithm}\label{alg:gen-sinkhorn}
\begin{algorithmic}[1]
\STATE Initialize $\alpha_{i_\ell}^{(\ell)}=1$, for $\ell\in\{1,\ldots,n\}$ and $i_\ell\in\mathcal I_\ell$.\\[0.045in]
\STATE 
For $\ell=1:n$ and $i_\ell\in\mathcal I_\ell$, set
\[
\alpha_{i_\ell}=\mathbb G(S^+_{i_\ell},S^-_{i_\ell},p_{i_\ell}^{\ell}).
\]
\STATE Repeat steps 2 until convergence.
\STATE Obtain $P_{i_1i_2\cdots i_n}^\star=Q_{i_1i_2\cdots i_n}\prod_{\ell=1}^n a_{i_\ell}^{(\ell)}$.
\end{algorithmic}
\end{algorithm}

The dual function of~\eqref{eq:relativeentropy2} is obtained by substituting the optimizer in~\eqref{eq:P2} into the Lagrangian. It reads
\begin{align*}
    g(\lambda^{(1)}_{i_1},\ldots,\lambda^{(n)}_{i_n}) 
    & =\sum_{i_1i_2 \cdots i_n }\left( -\mathbf P^\star_{i_1i_2\cdots i_n} -\lambda^{(\ell)}_{i_\ell}p^\ell_{i_\ell}\right),
\end{align*}
and the dual problem is to maximize a concave function,
\begin{align}\label{prob:dual}
    \max_{\lambda^{(1)}_{i_1},\ldots,\lambda^{(n)}_{i_n}}\ g(\lambda^{(1)}_{i_1},\ldots,\lambda^{(n)}_{i_n}).
\end{align}
Due to our standing Assumption \ref{slater} that Slater's condition holds~\cite{boyd2004convex,borwein2006convex} and the convexity of the primal, we have the following.

\begin{proposition}\label{prop:duality}
Strong duality holds between the primal problem to minimize \eqref{eq:relativeentropy2} subject to \eqref{eq:data2} and the dual \eqref{prob:dual}.    
\end{proposition}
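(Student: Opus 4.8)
The plan is to read off Proposition~\ref{prop:duality} from the standard Lagrangian duality theory for convex programs: the primal is the minimization of a strictly convex, coercive functional subject to affine equality constraints, and Assumption~\ref{slater} is precisely the Slater condition under which strong duality holds and the dual optimum is attained~\cite{boyd2004convex,borwein2006convex}. The only non-routine step is to confirm that the explicit function $g$ written just before \eqref{prob:dual} is genuinely the Lagrangian dual $\inf_{\mathbf P\ge 0}\mathcal L$, i.e., that the inner minimization over the nonnegative orthant is attained at the interior stationary point \eqref{eq:P2} rather than on the boundary.

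I would first record the structural facts. Each summand of $S(\mathbf P,\mathbf Q)$ has the form $t\mapsto t\big(\log(t/q)-1\big)$ with $q=\mathbf Q_{i_1\cdots i_n}>0$; this is strictly convex on $(0,\infty)$, continuously extended to $[0,\infty)$ by $0\log 0=0$, and grows superlinearly as $t\to\infty$. Hence $S(\cdot,\mathbf Q)$ is strictly convex on its effective domain with compact sublevel sets, while the constraints \eqref{eq:data2} are affine in $\mathbf P$ --- the factors $X_{i_1\cdots i_n}\in\{-1,+1\}$ merely change signs of coefficients --- and cut out a closed convex set. Coercivity then yields attainment of the primal infimum (in particular the primal value is finite), and strict convexity yields uniqueness of the minimizer $\mathbf P^\star$, consistent with the remark following Assumption~\ref{slater}.

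Next I would compute the dual function. Writing $c_{i_1\cdots i_n}:=X_{i_1\cdots i_n}\sum_{\ell=1}^n\lambda^{(\ell)}_{i_\ell}$, the Lagrangian decouples over the multi-index as $\mathcal L=\sum_{i_1\cdots i_n}\big[\mathbf P_{i_1\cdots i_n}\big(\log(\mathbf P_{i_1\cdots i_n}/\mathbf Q_{i_1\cdots i_n})-1\big)+c_{i_1\cdots i_n}\mathbf P_{i_1\cdots i_n}\big]-\sum_\ell\lambda^{(\ell)}_{i_\ell}p^{(\ell)}_{i_\ell}$. For fixed multipliers, the derivative of the $(i_1\cdots i_n)$-summand in $\mathbf P_{i_1\cdots i_n}$ is $\log(\mathbf P_{i_1\cdots i_n}/\mathbf Q_{i_1\cdots i_n})+c_{i_1\cdots i_n}$, which tends to $-\infty$ as $\mathbf P_{i_1\cdots i_n}\downarrow 0$; therefore the minimum over $[0,\infty)$ is attained in the interior at $\mathbf P^\star_{i_1\cdots i_n}=\mathbf Q_{i_1\cdots i_n}e^{-c_{i_1\cdots i_n}}$, which is \eqref{eq:P2}. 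Substituting and using $t\big(\log(t/q)-1\big)+ct=-t$ at $t=qe^{-c}$ gives $g(\lambda^{(1)}_{i_1},\ldots,\lambda^{(n)}_{i_n})=\sum_{i_1\cdots i_n}\big(-\mathbf P^\star_{i_1\cdots i_n}\big)-\sum_\ell\lambda^{(\ell)}_{i_\ell}p^{(\ell)}_{i_\ell}$, the stated dual; as a pointwise infimum of affine functions of the multipliers it is concave, and \eqref{prob:dual} is a concave maximization.

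Finally, the primal being convex and feasible with the Slater condition holding (for affine equality constraints it suffices that some point of the relative interior of $\operatorname{dom}S(\cdot,\mathbf Q)$ be feasible, which is exactly the content of Assumption~\ref{slater}), the strong-duality theorem~\cite{boyd2004convex,borwein2006convex} asserts zero duality gap between the primal value and $\max_\lambda g(\lambda)$, with the dual maximum attained; evaluating \eqref{eq:P2} at any dual maximizer returns $\mathbf P^\star$. The main obstacle --- and really the only step beyond invoking textbook duality --- is the boundary argument in the previous paragraph: one must check that the nonnegativity constraints are inactive at the inner optimum, which is what makes the simple closed form of $g$ legitimate. The sign-indefiniteness of the prior never interferes with this, as $X$ appears only linearly in the constraints and not in the objective.
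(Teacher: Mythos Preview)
Your argument is correct and follows the same route as the paper: both invoke convexity of the primal together with Assumption~\ref{slater} (Slater's condition) and cite \cite{boyd2004convex,borwein2006convex}. The paper gives no proof beyond the sentence preceding the proposition, so your additional verification that the inner minimization over $\mathbf P\ge 0$ is attained in the interior (so that \eqref{eq:P2} and hence the stated form of $g$ are legitimate) is a welcome elaboration rather than a departure.
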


We are now in a position to discuss convergence of Algorithm \ref{alg:gen-sinkhorn}. The argument is completely analogous to the earlier argument on the standard Sinkhorn, as it relies on a gradient ascent nature of the problem aforementioned the result of Luo and Tseng~\cite{luo1992convergence}.

\begin{theorem}\label{thm:convergence}
Algorithm~\ref{alg:gen-sinkhorn} converges with a linear convergence rate.   
\end{theorem}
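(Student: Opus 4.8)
The plan is to recast Algorithm~\ref{alg:gen-sinkhorn} as an exact block coordinate ascent (nonlinear Gauss--Seidel) scheme on the dual problem~\eqref{prob:dual}, mirroring the argument sketched for the classical Sinkhorn iteration in Section~\ref{sec:sinkhorn}, and then invoke the linear convergence theory of Luo and Tseng~\cite{luo1992convergence} for such schemes. Concretely, the dual variables partition naturally into $n$ blocks $\lambda^{(\ell)}=(\lambda^{(\ell)}_{i_\ell})_{i_\ell\in\mathcal I_\ell}$, one per marginal direction, and one sweep of step~2 of the algorithm (for a fixed $\ell$, updating all $\alpha_{i_\ell}=e^{-\lambda^{(\ell)}_{i_\ell}}$) will be shown to coincide with the exact maximization of $g$ over the block $\lambda^{(\ell)}$ while the other blocks are held fixed.

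The first step is the algebraic identification: I would compute $\partial g/\partial\lambda^{(\ell)}_{i_\ell}$ from the dual expression $g=\sum(-\mathbf P^\star - \lambda^{(\ell)}_{i_\ell}p^\ell_{i_\ell})$, using~\eqref{eq:P2}, and show the stationarity condition in the $i_\ell$-th coordinate is precisely~\eqref{eq:aainvp}, i.e. $S^+_{i_\ell}\alpha_{i_\ell} - S^-_{i_\ell}\alpha_{i_\ell}^{-1} = p^\ell_{i_\ell}$. Because within a fixed block $\ell$ the quantities $S^+_{i_\ell}, S^-_{i_\ell}$ depend only on the \emph{other} blocks (they involve $\prod_{k\neq\ell}\alpha_{i_k}^{\pm1}$), the block subproblem separates over $i_\ell\in\mathcal I_\ell$ into scalar strictly concave problems, each solved in closed form by the admissible positive root~\eqref{eq:logquad1}, $\alpha_{i_\ell}=\mathbb G(S^+_{i_\ell},S^-_{i_\ell},p^\ell_{i_\ell})$. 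This establishes that the algorithm is \emph{exact} block coordinate ascent, and monotone nondecreasing in $g$.

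The second step is to check that the hypotheses of the Luo--Tseng result apply: strong duality and attainment of the dual optimum (from Proposition~\ref{prop:duality} together with Assumption~\ref{slater}, which gives a Slater point and hence dual solvability and boundedness of the relevant level set), concavity and smoothness of $g$ on the relevant domain, and the separable/Cartesian block structure of the feasible set (here unconstrained, $\lambda^{(\ell)}\in\mathbb R^{|\mathcal I_\ell|}$). One then needs the local error bound / quadratic-growth-type condition that powers the Luo--Tseng analysis; for an entropy-type dual of this form the Hessian is negative definite on a neighborhood of the optimum after fixing the inherent translation gauge (the objective is invariant under shifting one block up and another down, analogously to the row/column rescaling ambiguity in classical Sinkhorn), so I would work modulo that gauge or note that the $\alpha$-iterates, not the $\lambda$'s, are what converge. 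Conclude that the iterates converge to the unique primal optimum $\mathbf P^\star$ (uniqueness from strict convexity of the entropy) at a linear rate.

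The main obstacle I expect is the verification of the error-bound/regularity condition required by~\cite{luo1992convergence} in the presence of the negative part: unlike the classical case where $g_s$ is a sum of exponentials with a clean structure, here $\mathbf P^\star$ contains both $\prod\alpha_{i_\ell}$ and $\prod\alpha_{i_\ell}^{-1}$ terms, so one must confirm the dual objective still has the requisite local strong concavity (again modulo the gauge subspace) and that the sublevel set on which the algorithm operates is compact in the appropriate quotient; establishing coercivity of $-g$ on the orthogonal complement of the translation directions, using feasibility (Assumption~\ref{slater}) to rule out escape of $\alpha_{i_\ell}$ to $0$ or $\infty$, is the technical crux. A secondary point worth a remark is well-posedness of each update — that $S^+_{i_\ell}>0$ throughout, which holds provided $Q^+$ has no all-zero slice in direction $\ell$ (a mild nondegeneracy assumption that should be stated, paralleling the footnote for the non-negative Sinkhorn case).
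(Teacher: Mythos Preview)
Your proposal is correct and follows essentially the same route as the paper: identify Algorithm~\ref{alg:gen-sinkhorn} with exact coordinate ascent on the dual~\eqref{prob:dual} by showing that the stationarity condition $\partial g/\partial\lambda^{(\ell)}_{i_\ell}=0$ reduces to~\eqref{eq:aainvp}--\eqref{eq:poly}, and then invoke Luo--Tseng~\cite{luo1992convergence} for linear convergence. If anything, you are more careful than the paper, which does not explicitly address the gauge invariance, the verification of the Luo--Tseng error-bound hypotheses, or the nondegeneracy $S^+_{i_\ell}>0$ that you flag.
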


\begin{proof}
The objective function in~\eqref{eq:relativeentropy2} is strictly convex, and thus the optimizer $\mathbf P^\star$ can be obtained using a 
coordinate ascent strategy with strong duality holding (Proposition~\ref{prop:duality}). Specifically, coordinate-wise optimization of the dual~\eqref{prob:dual}
\begin{align*}
    \lambda^{(\ell)}_{i_\ell} = \argmax_{\lambda^{(\ell)}_{i_\ell}}\ g(\lambda^{(1)}_{i_1},\ldots,\lambda^{(n)}_{i_n}), \ \forall i_\ell\in\mathcal I_\ell
\end{align*}
gives that
\begin{align*}
    \frac{\partial g(\lambda^{(1)}_{i_1},\ldots,\lambda^{(n)}_{i_n})}{\partial \lambda_{i_\ell}} = \!\!\!\!
    \sum_{i_1i_2 \cdots i_n \backslash i_1} \!\!\!\! X_{i_1i_2\dots i_n}\mathbf Q_{i_1i_2\cdots i_n} \prod_{\ell=1}^n \alpha_{i_\ell}^{X_{i_1i_2\dots i_n}}  - p^{\ell}_{i_\ell}=0
\end{align*}
at each step, leading to an update for $\alpha_{i_\ell}$ to satisfy \eqref{eq:aainvp}, and thereby, \eqref{eq:poly}.
The generalized Sinkhorn iteration in Algorithm~\ref{alg:gen-sinkhorn} inherits the linear convergence rate of 
coordinate ascent~\cite[Theorem 2.1, Application 5.3]{luo1992convergence}, as in the standard Sinkhorn. 
\end{proof}

\section{Numerical examples}\label{sec:num}
We elucidate the framework with $2$-marginal and $3$-marginal examples. For better visualization, we color negative entries in red and positive ones in blue, while respective marginals are marked in cyan.
The absolute value or intensities corresponds to the radius of circles and spheres, drawn in the $2-$ and $3$-dimensional cases, respectively. The code that was used for these numerical examples has been posted at~\url{https://github.com/dytroshut/Generalized-Sinkhorn-Algorithm}.

\subsection{Two-marginal case}
A prior $Q_{i_1i_2}$ is selected and shown in the left subplot of Figure~\ref{fig:2d}. Its entries have been assigned values in $\{0,\pm 1\}$. Positive values are drawn in blue and negative in red, as noted, and for ease of reference the ``prior'' marginals $q^{1}_{i_1} = \sum_{i_2}Q_{i_1i_2}$ and $q^{2}_{i_2} = \sum_{i_1}Q_{i_1i_2}$ are also depicted in the subplot and drawn by circles in cyan.
Further, we specify
(positive) marginals $p^{1}_{i_1}$ and $p^{2}_{i_2}$ with values
\begin{align*}
    p^{1}_{i_1} &= [0.1,0.05,0.05,0.15,0.2,0.05,0.03,0.07,0.25,0.05]\\
    p^{2}_{i_2} &= [0.05,0.1,0.05,0.2,0.07,0.15,0.05,0.25,0.03,0.05].
\end{align*}
Applying Algorithm \ref{alg:gen-sinkhorn} we obtain
\begin{equation}\label{eq:exampleP} 
P^\star = 
   {\footnotesize
   \begin{bmatrix}
   \phantom{-}{ 0.6140}  &{\red -0.2138}   &{ 0.0208}  &0   &{ 0.0461}  &0  &{\red-0.4165}  &0   &{ 0.0251}  &{0.0242}\\
   {\red-0.2616}   &\phantom{-}0  &{0.0176}  &0   &0   &0   &\phantom{-}{0.2737}  &0    &0    &{0.0203}\\
   \phantom{-}{ 0.0147}    &\phantom{-}{ 0.0152}   &0   &0    &{ 0.0011}   &0    &\phantom{-}{ 0.0078}  &{ 0.0112}  &0  &0\\
   \phantom{-}0   &\phantom{-}0   &0   &0    &{ 0.0135}   &0   &\phantom{-}0    &{ 0.1365}   &0    &0\\
   \phantom{-}{ 0.0463}   &\phantom{-}0   &{ 0.0016}  &{ 0.1327}  &{ 0.0035}  &{ 0.0160} &\phantom{-}0   &0   &0  &0\\
   \phantom{-}0   &\phantom{-}0    &0    &0   &{ 0.0058}   &0    &\phantom{-}{ 0.0410}   &0    &{ 0.0032}    &0\\
   {\red-0.4975}    &\phantom{-}{ 0.2804}    &{ 0.0092}    &0    &0    &{ 0.0939}    &\phantom{-}{ 0.1440}   &0  &0   &0\\
   \phantom{-}0   &\phantom{-}0   &{ 0.0008}    &{ 0.0673}   &0    &0    &\phantom{-}0   &0    &{ 0.0010}   &{ 0.0009}\\
   \phantom{-}{ 0.1165}    &\phantom{-}0    &0    &0    &0    &{ 0.0402}  &\phantom{-}0  &{ 0.0888}    &0    &{ 0.0046}\\
   \phantom{-}{ 0.0176}   &\phantom{-}{ 0.0182}    &0  &0    &0    &0   &\phantom{-}0   &{ 0.0134}    &{ 0.0007}   &0  
\end{bmatrix}
}
\end{equation}
which is depicted in the right subplot of Figure~\ref{fig:2d} with the same convention.
\begin{figure}[htb!]
     \centering
     \begin{subfigure}[b]{0.44\textwidth}
         \centering
         \includegraphics[width=\textwidth]{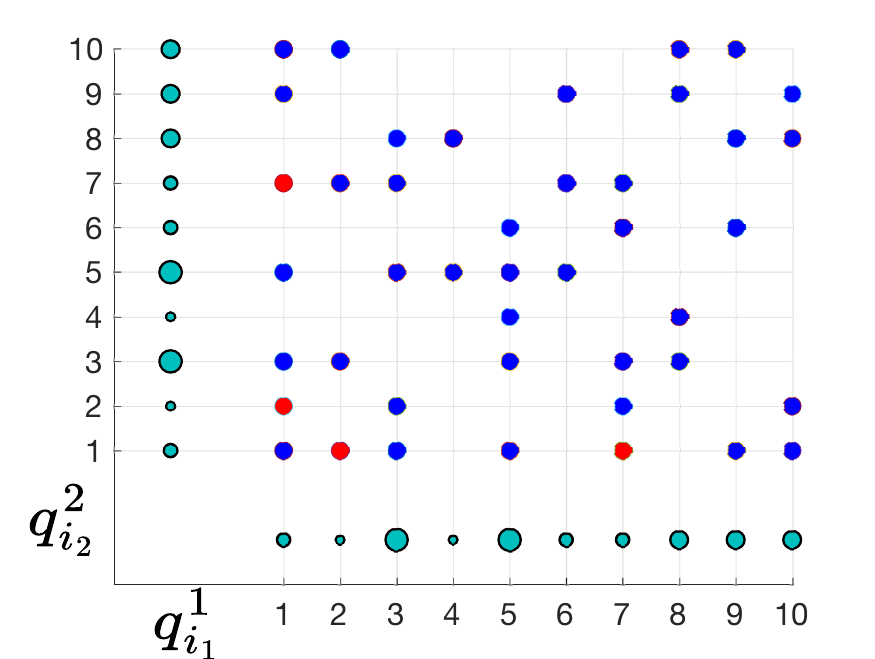}
     \end{subfigure}
     \begin{subfigure}[b]{0.45\textwidth}
         \centering
         \includegraphics[width=\textwidth]{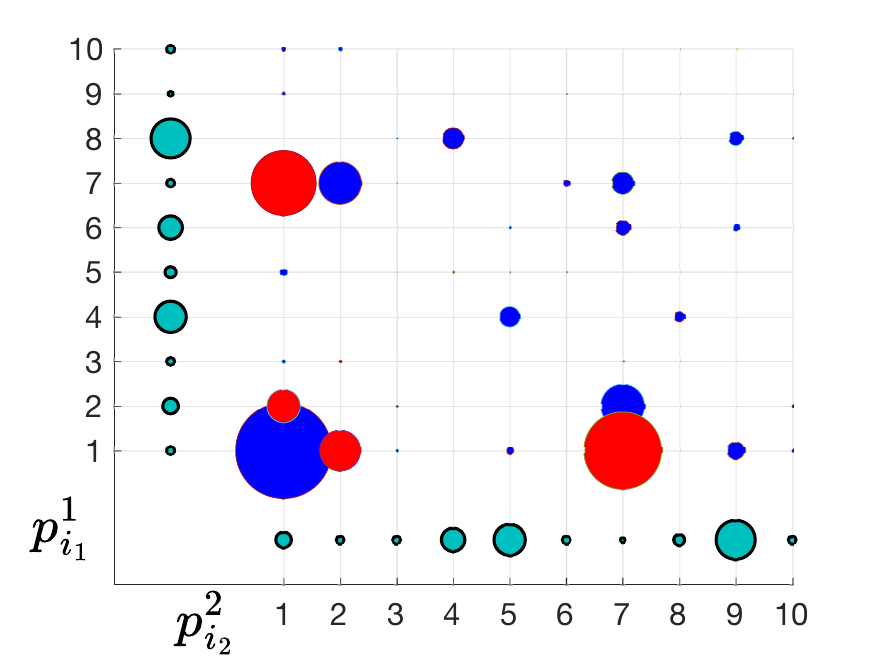}
     \end{subfigure}
     \caption{Sign-indefinite prior (left) and posterior (right) that matches specified marginals.}
     \label{fig:2d}
\end{figure}

\subsection{Three-marginal cases}
\subsubsection{Example 1}\label{ex:ex1} For a $3$-dimensional we select $Q_{i_1i_2i_3}$, {\footnotesize
\begin{align*}
Q(:,:,1)&=  \frac{1}{14}
    \begin{bmatrix}
    \phantom{-}1     &\phantom{-}1    &-1\\
    \phantom{-}1     &\phantom{-}0    &\phantom{-}1\\
    \phantom{-}1     &\phantom{-}0    &\phantom{-}1
    \end{bmatrix},\ \ 
Q(:,:,2)&=  \frac{1}{14}
    \begin{bmatrix}
    -1     &\phantom{-}0    &\phantom{-}0\\
    \phantom{-}1     &\phantom{-}1    &\phantom{-}0\\
    \phantom{-}1     &\phantom{-}1    &\phantom{-}0
    \end{bmatrix},\ \
Q(:,:,3)&=  \frac{1}{14}
    \begin{bmatrix}
     \phantom{-}1    &\phantom{-}1    &\phantom{-}1\\
     \phantom{-}1    &-1     &\phantom{-}1\\
    \phantom{-}0     &\phantom{-}1     &\phantom{-}1
    \end{bmatrix}.
\end{align*}}%
The ``prior marginals'' are
\begin{align*}
    q^{1}_{i_1} &:= \sum_{i_2i_3} Q_{i_1i_2i_3} = [0.2143,~0.3571,~0.4286],\\
    q^{2}_{i_2} &:= \sum_{i_1i_3} Q_{i_1i_2i_3} = [0.4286,~0.2857,~0.2857],\\
    q^{3}_{i_3} &:= \sum_{i_1i_2} Q_{i_1i_2i_3} = [0.3571,~0.2143,~0.4286].
\end{align*}
We now specify marginals
\begin{align*}
    p^1_{i_1} = [0.2,0.3,0.5], \ p^2_{i_2} = [0.4,0.4,0.2], \
    p^3_{i_3} = [0.1,0.6,0.3],
\end{align*}
and obtain the following values for the posterior $P^\star_{i_1i_2i_3}$, using Algorithm \ref{alg:gen-sinkhorn}, 
\begin{align*}
P^\star(:,:,1)&=  
    \begin{bmatrix}
    \phantom{-}0.1167   &\phantom{-}0.2368   &-0.5447\\
    \phantom{-}0.0659   &\phantom{-}0   &\phantom{-}0.1402\\
    \phantom{-}0.0272   &\phantom{-}0   &\phantom{-}0.0579
    \end{bmatrix},
P^\star(:,:,2)= 
    \begin{bmatrix}
-0.3456  &\phantom{-}0  &\phantom{-}0\\
\phantom{-}0.2210  &\phantom{-}0.4482  &\phantom{-}0\\
\phantom{-}0.0913  &\phantom{-}0.1851  &\phantom{-}0
    \end{bmatrix},\\
P^\star(:,:,3)&= 
    \begin{bmatrix}
    \phantom{-}0.1429   &\phantom{-}0.2898    &\phantom{-}0.3041\\
    \phantom{-}0.0806   &-0.8275   &\phantom{-}0.1716\\
    \phantom{-}0   &\phantom{-}0.0676    &\phantom{-}0.0709
    \end{bmatrix}.    
\end{align*}
Figure~\ref{fig:3d_small} displays the prior and posterior in three dimensions.
\begin{figure}[htb!]
     \centering
     \begin{subfigure}[b]{0.48\textwidth}
         \centering
         \includegraphics[width=\textwidth]{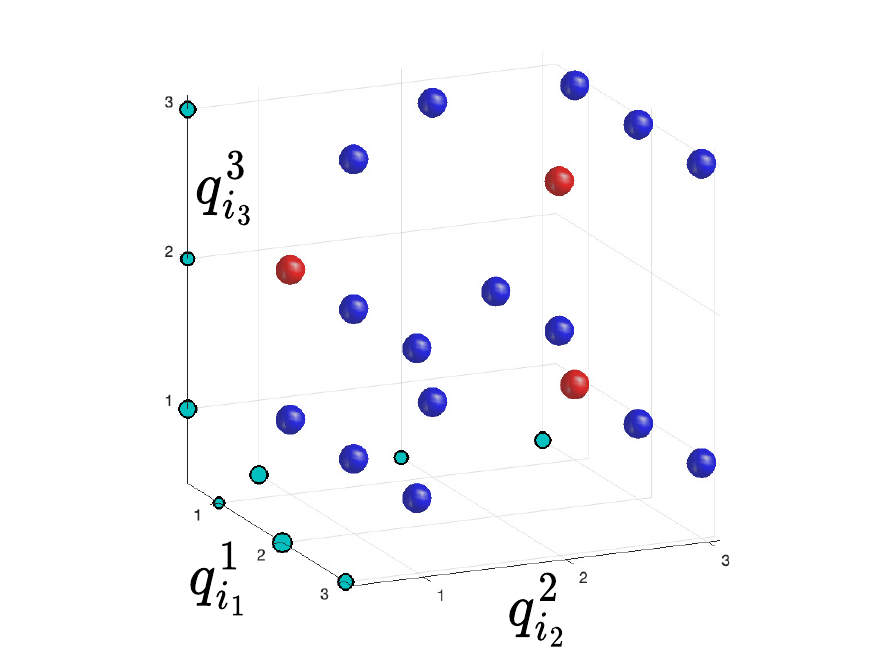}
     \end{subfigure}
     \begin{subfigure}[b]{0.48\textwidth}
         \centering
         \includegraphics[width=\textwidth]{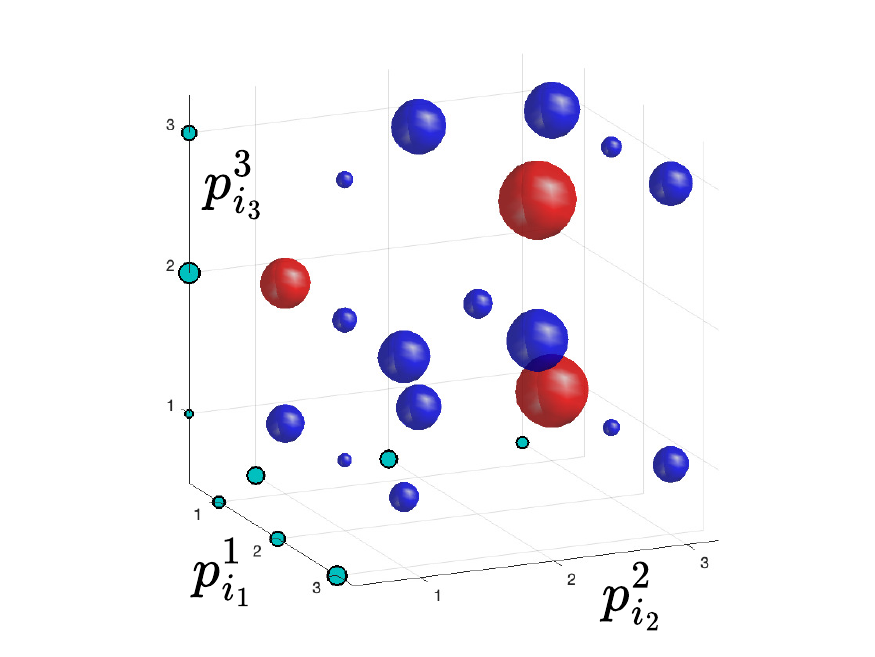}
     \end{subfigure}
     \caption{Example \ref{ex:ex1}: Sign-indefinite prior (left) and posterior (right).  }
     \label{fig:3d_small}
\end{figure}

\subsubsection{Example 2}\label{ex:ex2}
The generalized Sinkhorn algorithm has good scalability and linear convergence rate. To highlight this, we include a three-dimensional example where the prior has $2266$ positive entries and $51$ negative entries, randomly selected, and depicted in Figure~\ref{fig:3d_large}. 
\begin{figure}[htb!]
     \centering
     \begin{subfigure}[b]{0.48\textwidth}
         \centering
         \includegraphics[width=\textwidth,trim={1cm 0cm 1cm 1cm},clip]{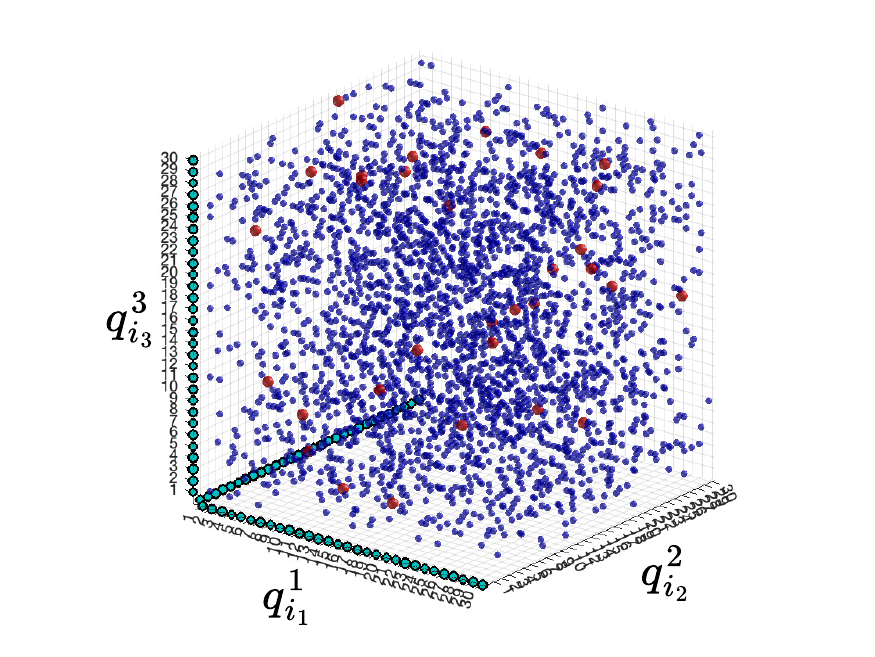}
     \end{subfigure}
     \begin{subfigure}[b]{0.48\textwidth}
         \centering
         \includegraphics[width=\textwidth,trim={1cm 0cm 1cm 1cm},clip]{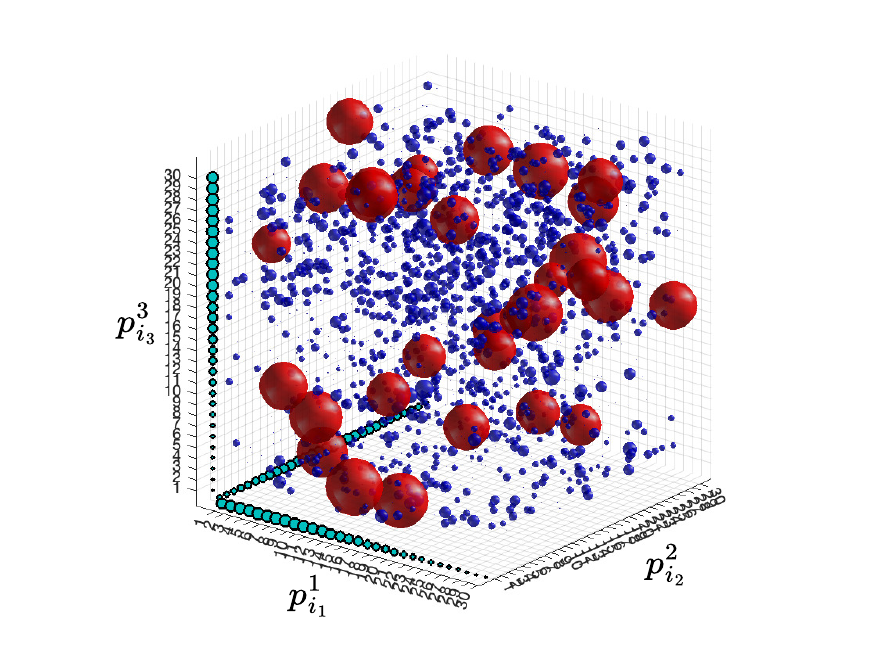}
     \end{subfigure}
     \caption{Large-scale example. Prior on left and posterior on right.}
     \label{fig:3d_large}
\end{figure}
Similarly, marginals are selected randomly, with code available in~\url{https://github.com/dytroshut/Generalized-Sinkhorn-Algorithm}, and the posterior is shown on the right subplot of Figure~\ref{fig:3d_large}.
The convergence of Algorithm \ref{alg:gen-sinkhorn} is displayed by plotting
the value of marginal constraint violation $\log(\| \sum_{i_1i_2 \cdots i_n \backslash i_\ell}  P^{*}_{i_1i_2\cdots i_n}-p_{i_\ell}^{\ell}\|)$ as a function of the iteration in Figure~\ref{fig:convergence_rate}.

\begin{figure}[!t]
\centering
    \includegraphics[width=.6\columnwidth,trim={0.3cm 0.5cm 1cm 0.5cm},clip]{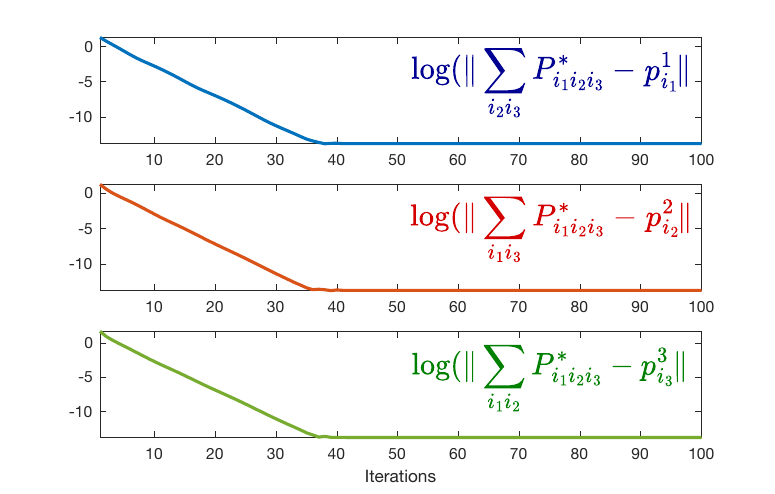}
    \caption{Marginal constraint violation as a function of iteration for the three marginals for Example \ref{ex:ex2}. The generalized Sinkhorn algorithm converges within $40$ iterations to machine precision.}
    \label{fig:convergence_rate}
\end{figure}

\section{Concluding remarks}\label{sec:conclusion}
The multi-index arrays $P,~Q$ can also be viewed as sign-indefinite probability laws, as advocated by Paul Dirac~\cite{dirac1942bakerian} and Richard Feynman~\cite{feynman1987negative}, to serve as a computational construct that models internal manifestation of underlying relations. To a degree, this was the starting point in our earlier work \cite{dong2023negative} aimed at gene regulatory networks. It is not uncommon in biological and also chemical networks~\cite[Chapter 5]{newman2018networks} that a sign-indefinite gain, that can be viewed as above, dictates concentration levels of proteins and other chemical substances. The precise measurement of such gains are of importance in understanding the structure and functionality of networks, in cancer-specific gene regulation,interaction between transcription factors, and the effects of various drug therapies~\cite{hopkins2008network}.


Finally, we point out that a natural generalization of the basic problem in this paper, that can be treated in a similar manner, is to include a ``transportation cost'' $\langle C,P\rangle:=\sum_{i_1,\ldots,i_n}C_{i_1,\ldots,i_n}P_{i_1,\ldots,i_n}$, for a suitable tensor $C$ that penalizes coupling for combination of indices, in a problem to minimize $\langle C,P\rangle+S(P,Q)$ over posteriors $P$ that meet marginal constraints. 

\bibliographystyle{siamplain}
\bibliography{references}

\end{document}